\newcommand{\hf}{{\frac 12}}
\newcommand{\bfA}{{\bf A}}
\newcommand{\bfB}{{\bf B}}
\newcommand{\bfC}{{\bf C}}
\newcommand{\bfD}{{\bf D}}
\newcommand{\bfF}{{\bf F}}
\newcommand{\bfJ}{{\bf J}}
\newcommand{\bfK}{{\bf K}}
\newcommand{\bfW}{{\bf W}}
\newcommand{\bfY}{{\bf Y}}
\newcommand{\bfZ}{{\bf Z}}
\newcommand{\bfc}{{\bf c}}
\newcommand{\bfe}{{\bf e}}
\newcommand{\bfy}{{\bf y}}
\newcommand{\bfu}{{\bf u}}
\newcommand{\calN}{{\cal N}}
\newcommand{\bfmu}{{\boldsymbol \mu}}
\newcommand{\bfbeta}{{\boldsymbol \beta}}
\newcommand{\bftheta}{{\boldsymbol \theta}}
\let\OLDthebibliography\thebibliography
\renewcommand\thebibliography[1]{
  \OLDthebibliography{#1}
  \setlength{\parskip}{0pt}
  \setlength{\itemsep}{0pt plus 0.3ex}
}
\newcommand{\diag}{\mathrm{diag}\,}
\newcommand{\R}{\ensuremath{\mathds{R}}}
\newtheorem{theorem}{Theorem}
\newtheorem{proof}{Proof}
\newdimen\iwidth
\newdimen\iheight
\author[1,3]{Lars Ruthotto}
\author[2,3]{Eldad Haber}
\affil[1]{Emory University, Department of Mathematics and Computer Science, Atlanta, GA, USA, (\url{lruthotto@emory.edu})}
\affil[2]{Department of Earth and Ocean Science, The University of British Columbia, Vancouver, BC, Canada, (\url{ehaber@eoas.ubc.ca})}
\affil[3]{Xtract Technologies Inc., Vancouver, Canada, (\url{info@xtract.tech})}
\title{Deep Neural Networks Motivated by Partial Differential Equations}
\begin{document}
\maketitle

\begin{abstract}    
   \textbf{Abstract.} Partial differential equations (PDEs) are indispensable for modeling many physical phenomena and also commonly used for solving image processing tasks. 
        In the latter area, PDE-based approaches interpret image data as discretizations of multivariate functions and the output of image processing algorithms as solutions to certain PDEs. 
        Posing image processing problems in the infinite dimensional setting provides powerful tools for their analysis and solution. 
        Over the last few decades, the reinterpretation of classical image processing problems through the PDE lens has been creating multiple celebrated approaches that benefit a vast area of tasks including image segmentation, denoising, registration, and reconstruction.

        In this paper, we establish a new PDE-interpretation of a class of deep convolutional neural networks (CNN) that are commonly used to learn from speech, image, and video data. 
        Our interpretation includes convolution residual neural networks (ResNet), which are among the most promising approaches for tasks such as image classification having improved the state-of-the-art performance in prestigious benchmark challenges. 
        Despite their recent successes, deep ResNets still face some critical challenges associated with their design, immense computational costs and memory requirements, and lack of understanding of their reasoning.
    
        Guided by well-established PDE theory, we derive three new ResNet architectures that fall into two new classes: parabolic and hyperbolic CNNs. 
        We demonstrate how PDE theory can provide new insights and algorithms for deep learning and demonstrate the competitiveness of three new CNN  architectures using numerical experiments.
\end{abstract}

{\small \textbf{Keywords:} Machine Learning, Deep Neural Networks, Partial Differential Equations, PDE-Constrained Optimization,   Image Classification}

%
\section{Introduction} 
\label{sec:introduction}
Over the last three decades, algorithms inspired by partial differential equations (PDE) have had a profound impact on many processing tasks that involve speech, image, and video data. 
Adapting PDE models that were traditionally used in physics to perform image processing tasks has led to ground-breaking contributions. 
An incomplete list of seminal works includes optical flow models for motion estimation~\cite{HornSchunck1981}, nonlinear diffusion models for filtering of images~\cite{PeronaMalik1990}, variational methods for image segmentation~\cite{MumfordShah1989,AmbrosioTortorelli1990,ChanVese1999}, and nonlinear edge-preserving denoising~\cite{RudinOsherFatemi1992}.

A standard step in PDE-based data processing is interpreting the involved data as discretizations of multivariate functions. 
Consequently, many operations on the data can be modeled as discretizations of PDE operators acting on the underlying functions.
This continuous data model has led to solid mathematical theories for classical data processing tasks obtained by leveraging the rich results from PDEs and variational calculus (e.g.,~\cite{scherzer2009variational}).
The continuous perspective has also enabled more abstract formulations that are independent of the actual resolution, which has been exploited to obtain efficient multiscale and multilevel algorithms (e.g.,~\cite{modersitzki2009fair}).

In this paper, we establish a new PDE-interpretation of deep learning tasks that involve speech, image, and video data as features. 
Deep learning is a form of machine learning that uses neural networks with many hidden layers~\cite{bengio2009learning,lecun2015deep}.
Although neural networks date back at least to the 1950s~\cite{Rosenblatt1958}, their popularity soared a few years ago when deep neural networks (DNNs) outperformed other machine learning methods in speech recognition~\cite{RainaEtAl2009} and image classification~\cite{hinton2012deep}.
Deep learning also led to dramatic improvements in computer vision, e.g., surpassing human performance in image recognition~\cite{hinton2012deep,KrizhevskySutskeverHinton2012,lecun2015deep}.
These results ignited the recent flare of research in the field. 
To obtain a PDE-interpretation, we use a continuous representation of the images and extend recent works by~\cite{HaberRuthotto2017,E2017}, which relate deep learning problems for general data types to ordinary differential equations (ODE).

Deep neural networks filter input features using several layers whose operations consist of element-wise nonlinearities and affine transformations. 
The main idea of convolutional neural networks (CNN)~\cite{LeCunBengio1995} is to base the affine transformations on convolution operators with compactly supported filters. 
Supervised learning aims at learning the filters and other parameters, which are also called weights, from training data. 
CNNs are widely used for solving large-scale learning tasks involving data that represent a discretization of a continuous function, e.g., voice, images, and videos~\cite{KrizhevskySutskeverHinton2012,LeCunBengio1995,LeCunKavukcuogluFarabet2010}. 
By design, each CNN layer exploits the local relation between image information, which simplifies computation~\cite{RainaEtAl2009}. 

Despite their enormous success, deep CNNs still face critical challenges including designing a CNN architecture that is effective for a practical learning task, which requires many choices. 
In addition to the number of layers, also called depth of the network, important aspects are the number of convolution filters at each layer, also called the width of the layers, and the connections between those filters. 
A recent trend is to favor deep over wide networks,  aiming at improving generalization (i.e., the performance of the CNN on new examples that were not used during the training)~\cite{lecun2015deep}. 
Another key challenge is designing the layer, i.e., choosing the combination of affine transformations and nonlinearities. 
A practical but costly approach is to consider depth, width, and other properties of the architecture as hyper-parameters and jointly infer them with the network weights~\cite{hernandez2016general}. 
Our interpretation of CNN architectures as discretized PDEs provides new mathematical theories to guide the design process.
In short, we obtain architectures by discretizing the underlying PDE through adequate time integration methods. 

In addition to substantial training costs, deep CNNs face fundamental challenges when it comes to their interpretability and robustness. 
In particular, CNNs that are used in mission-critical tasks (such as driverless cars) face the challenge of being "explainable." Casting the learning task within nonlinear PDE theory allows us to understand the properties of such networks better. We believe that further research into the mathematical structures presented here will result in a more solid understanding of the networks and will close the gap between deep learning and more mature fields that rely on nonlinear PDEs such as fluid dynamics.    
A direct impact of our approach can be observed when studying,
e.g., adversarial examples. Recent works~\cite{PeiEtAl2017} indicate that the predictions obtained by deep networks can be very sensitive to perturbations of the input images.
These findings motivate us to favor networks that are stable, i.e., networks whose output are robust to small perturbations of the input features, similar to what PDE analysis suggests.

In this paper, we consider residual neural networks (ResNet)~\cite{he2016deep}, a very effective type of neural networks. 
We show that residual CNNs can be interpreted as a discretization of a space-time differential equation. 
We use this link for analyzing the stability of a network
and for motivating new network models that bear similarities with well-known PDEs.
 Using our framework, we present three new architectures. 
First, we introduce parabolic CNNs that restrict the forward propagation to dynamics that smooth image features and bear similarities with anisotropic filtering~\cite{PeronaMalik1990,Weickert2009,ChenPock2017}. 
Second, we propose hyperbolic CNNs that are inspired by Hamiltonian systems and finally, a 
third, second-order hyperbolic CNN. 
As to be expected, those networks have different properties.
For example,  hyperbolic CNNs approximately preserve the energy in the system, which sets them apart from parabolic networks that smooth the image data, reducing the energy. 
Computationally, the structure of a hyperbolic forward propagation can be exploited to alleviate the memory burden because hyperbolic dynamics can be made reversible on the continuous and discrete levels. 
The methods suggested here are closely related to reversible ResNets~\cite{GomezEtAl2017,Chang2017Reversible}.

The remainder of this paper is organized as follows. 
In Section~\ref{sec1}, we provide a brief introduction into residual networks and their relation to ordinary and, in the case of convolutional neural networks, partial differential equations. 
In Section~\ref{sec3}, we present three novel CNN architectures motivated by PDE theory. 
Based on our continuous interpretation we present regularization functionals that enforce the smoothness of the dynamical systems, in Section~\ref{sec5}.
In Section~\ref{sec6}, we present numerical results for image classification that indicate the competitiveness of our PDE-based architectures.
Finally, we highlight some directions for future research in Section~\ref{sec7}.

\begin{figure}[t]
    \begin{center}
        \includegraphics[width=.49\textwidth,trim=110 6 195 0,clip=true]{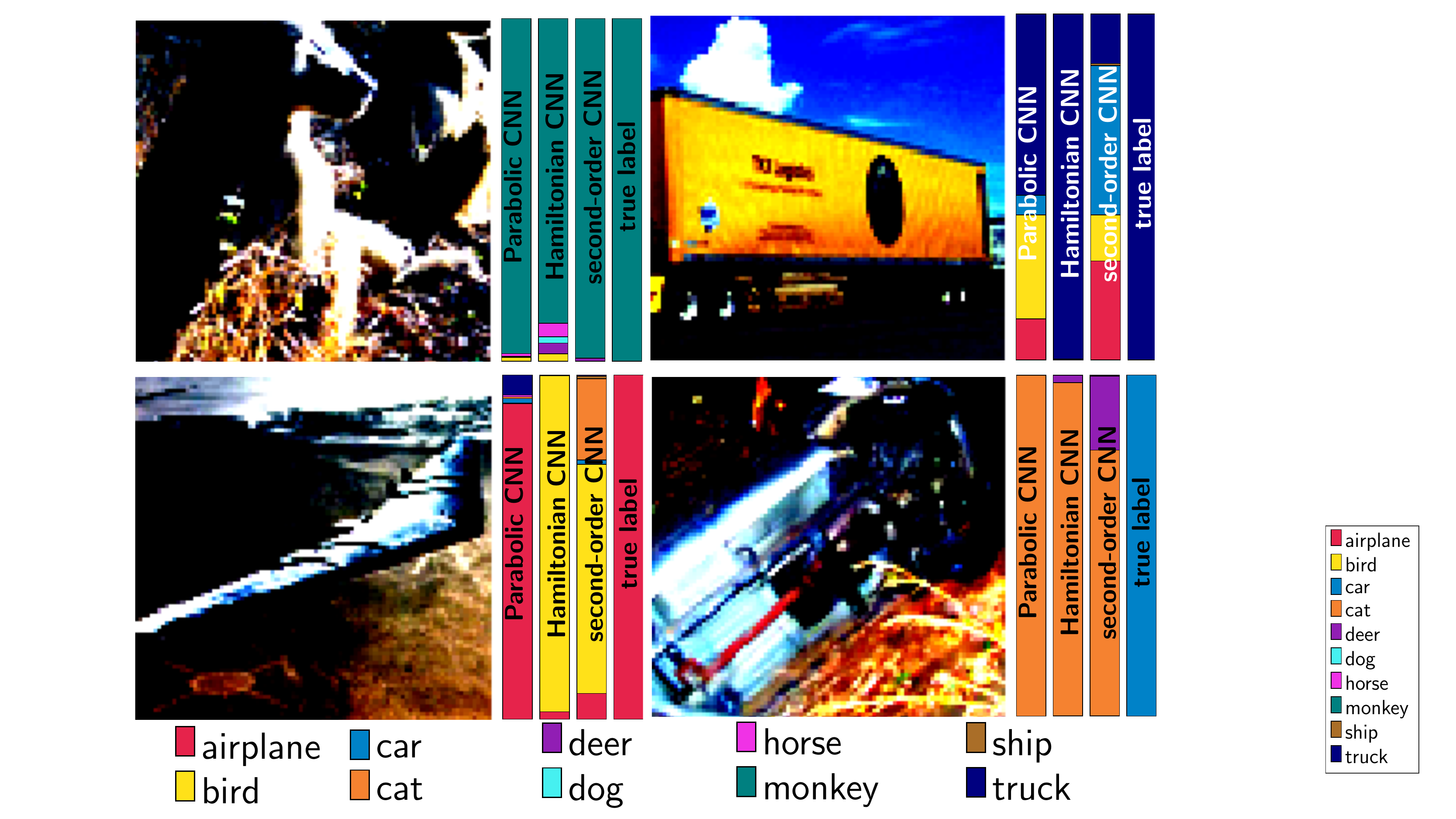}
    \end{center}
    \caption{Classification results of the three proposed CNN architecture for four randomly selected test images from the STL-10 dataset~\cite{CoatesEtAl2011}. 
    The predicted and actual class probabilities are visualized using bar plots on the right of each image.
    While all networks reach a competitive prediction accuracy between around 74\% and 78\% across the whole dataset, predictions for individual images vary in some cases. 	}\label{fig:CNNresults}    
\end{figure}

\section{Residual Networks and Differential Equations} 
\label{sec1}

The abstract goal of machine learning is to find a function $f : \R^n \times \R^p \to \R^m$ such that $f(\cdot, \bftheta) $ accurately predicts the result of an observed phenomenon (e.g., the class of an image, a spoken word, etc.).  The function is parameterized by the weight vector $\bftheta\in\R^p$ that is trained using examples.
In supervised learning, a set of input features $\bfy_1,\ldots,\bfy_s \in \R^n$ and output labels $\bfc_1, \ldots, \bfc_s \in \R^m$ is available and used to train the model $f(\cdot,\bftheta)$. 
The output labels are vectors whose components correspond to the estimated probability of a particular example belonging to a given class. 
As an example, consider the image classification results in Fig.~\ref{fig:CNNresults} where the predicted and actual labels are visualized using bar plots.
 For brevity, we denote the training data by  $\bfY = [\bfy_1, \bfy_2, \ldots, \bfy_s] \in \R^{n\times s}$ and $\bfC = [\bfc_1, \bfc_2, \ldots, \bfc_s] \in\R^{m\times s}$. 

In deep learning, the function $f$ consists of a concatenation of nonlinear functions called hidden layers. 
Each layer is composed of affine linear transformations and pointwise nonlinearities and aims at filtering the input features in a way that enables learning. 
As a fairly general formulation, we consider an extended version of the layer used in~\cite{he2016deep}, which filters the features $\bfY$ as follows
\begin{equation}\label{eq:unit}
    \bfF(\bftheta,\bfY) = \bfK_{2}(\bftheta^{(3)}) \sigma\left( \calN (\bfK_{1}(\bftheta^{(1)}) \bfY,\bftheta^{(2)}) \right).
\end{equation}
Here, the parameter vector, $\bftheta$, is partitioned into three parts where $\bftheta^{(1)}$ and $\bftheta^{(3)}$ parameterize the linear operators $\bfK_1(\cdot) \in \R^{k \times n}$ and $\bfK_2(\cdot) \in \R^{k_{\mathrm{out}} \times k}$, respectively, and $\bftheta^{(2)}$ are the parameters of the normalization layer $\calN$. 
The activation function $\sigma : \R \to \R$ is applied component-wise. 
Common examples are $\sigma(x) = \tanh(x) $ or the rectified linear unit (ReLU) defined as $\sigma(x) = \max(0,x)$. 
A deep neural network can be written by concatenating many of the layers given in~\eqref{eq:unit}.

When dealing with image data, it is common to group the features into different channels (e.g., for RGB image data there are three channels) and define the operators $\bfK_1$ and $\bfK_2$ as block matrices consisting of spatial convolutions. Typically each channel of the output image is computed as a weighted sum of each of the convolved input channels. To give an example, assume that $\bfK_1$ has three input and two output channels and denote by $\bfK_1^{(\cdot,\cdot)}(\cdot)$ a standard convolution operator \cite{HansenNagyOLeary2006}. In this case, we can write $\bfK_1$ as
\begin{equation}\label{eq:K}
    \bfK_1(\bftheta)  = \left( \begin{array}{rrr}
        \bfK_1^{(1,1)}(\bftheta^{(1,1)}) & \bfK_1^{(1,2)}(\bftheta^{(1,2)}) & \bfK_1^{(1,3)}(\bftheta^{(1,3)})\\
        \bfK_1^{(2,1)}(\bftheta^{(1,2)}) & \bfK_1^{(2,2)}(\bftheta^{(2,2)}) & \bfK_1^{(2,3)}(\bftheta^{(2,3)})\\
    \end{array}
    \right),
\end{equation}
where $\bftheta^{(i,j)}$ denotes the parameters of the stencil of the $(i,j)$-th convolution operator.

A common choice for $\calN$ in ~\eqref{eq:unit} is the \emph{batch normalization} layer~\cite{Ioffe:2015ud}. This layer computes the empirical mean and standard deviation of each channel in the input images across the spatial dimensions and examples and uses this information to normalize the statistics of the output images. 
While the coupling of different examples is counter-intuitive, its use is wide-spread and motivated by empirical evidence showing a faster convergence of training algorithms.
The weights $\bftheta^{(2)}$ represent scaling factors and biases (i.e., constant shifts applied to all pixels in the channel) for each output channel that are applied after the normalization.

ResNets have recently improved the state-of-the-art in several benchmarks including computer vision contests on image classification~\cite{hinton2012deep,KrizhevskySutskeverHinton2012,lecun2015deep}. Given the input features $\bfY_0 = \bfY$, a ResNet unit with $N$ layers produces a filtered version $\bfY_N$ as follows 
\begin{equation}\label{eq:ResNN}
    \bfY_{j+1} = \bfY_j + \bfF(\bftheta^{(j)},\bfY_j), \enskip \text{ for } \enskip j=0,1,\ldots,N-1,
\end{equation}
where $\bftheta^{(j)}$ are the weights (convolution stencils and biases) of the $j$th layer. To emphasize the dependency of this process on the weights, we denote $\bfY_N(\bftheta)$.

Note that the dimension of the feature vectors (i.e., the image resolution and the number of channels) is the same across all layers of a ResNets unit, which is limiting in many practical applications. 
Therefore, implementations of deep CNNs contain a concatenation of ResNet units with other layers that can change, e.g., the number of channels and the image resolution (see, e.g.,\cite{he2016deep,Chang2017Reversible}).  

In image recognition, the goal is to classify the output of~\eqref{eq:ResNN}, $\bfY_N(\bftheta)$, using, e.g., a
linear classifier modeled by a fully-connected layer, i.e., an affine transformation with a dense matrix.
To avoid confusion with the ResNet units we denote these transformations as $\bfW \bfY_N(\bftheta) + (\bfB_W \bfmu) \bfe_s^\top$, where the columns of $\bfB_W$ represent a distributed bias and $\bfe_s \in\R^s$ is a vector of all ones. 
The parameters of the network and the classifier are unknown and have to be learned. 
Thus, the goal of learning is to estimate the network parameters, $\bftheta$, and the weights of the classifier, $\bfW, \bfmu$, by approximately solving the optimization problem
\begin{align}\label{eq:opt}
    \min_{\bftheta,\bfW,\bfmu}\; &  \frac{1}{2} S(\bfW \bfY_N(\bftheta) + (\bfB_W\bfmu)\bfe_s^\top, \bfC) + R(\bftheta, \bfW, \bfmu),
\end{align}
where $S$ is a loss function, which is convex in its first argument, and $R$ is a convex regularizer discussed below.
Typical examples of loss functions are the least-squares function in regression and logistic regression or cross entropy functions in classification~\cite{GoodfellowEtAl2016}.

The optimization problem in~\eqref{eq:opt} is challenging for several reasons. 
First, it is a high-dimensional and non-convex optimization problem. Therefore one has to be content with local minima.
Second,  the computational cost per example is high, and the number of examples is large. 
Third, very deep architectures are prone to problems such as vanishing and exploding gradients~\cite{BengioEtAl1994} that may occur when the discrete forward propagation is unstable~\cite{HaberRuthotto2017}. 

\subsection{Residual Networks and ODEs} 
\label{sub:deep_networks_and_odes}
We derived a continuous interpretation of the filtering provided by ResNets in~\cite{HaberRuthotto2017}. Similar observations were made in~\cite{E2017,ChaudhariEtAl2017}. 
The ResNet in~\eqref{eq:ResNN} can be seen as a forward Euler discretization (with a fixed step size of $\delta_t=1$) of the initial value problem
\begin{equation}
    \begin{aligned}
    \partial_t\bfY(\bftheta,t) &= \bfF(\bftheta(t),\bfY(t)), \text{ for } t \in (0,T] \label{eq:ODE1}\\
    \bfY(\bftheta,0) &= \bfY_0.
    \end{aligned}
\end{equation}
Here, we introduce an artificial time $t \in [0,T]$. 
The depth of the network is related to the arbitrary final time $T$ and the magnitude of the matrices $\bfK_1$ and $\bfK_2$ in~\eqref{eq:unit}. 
This observation shows the relation between the learning problem~\eqref{eq:opt} and parameter estimation of a system of nonlinear ordinary differential equations. 
Note that this interpretation does not assume any particular structure of the layer $\bfF$. 

The continuous interpretation of ResNets can be exploited in several ways. One idea is to accelerate training by solving a hierarchy of optimization problems that gradually introduce new time discretization points for the weights, $\bftheta$~\cite{HaberHolthamRuthotto2017}.
Also, new numerical solvers based on optimal control theory have been proposed in~\cite{li2017maximum}. 
Another recent work~\cite{chen2018neural} uses more sophisticated time integrators to solve the forward propagation and the adjoint problem (in this context commonly called back-propagation), which is needed to compute derivatives of the objective function with respect to the network weights.

\subsection{Convolutional ResNets and PDEs} 
\label{sec2}
In the following, we consider learning tasks involving features given by speech, image, or video data. 
For these problems, the input features, $\bfY$, can be seen as a discretization of a continuous function $Y(x)$.
We assume that the matrices $\bfK_1 \in\R^{\tilde{w} \times w_{\mathrm{in}}}$ and  $\bfK_2 \in\R^{w_\mathrm{out} \times \tilde{w}}$ in~\eqref{eq:unit} represent convolution operators~\cite{HansenNagyOLeary2006}. 
The parameters $w_\mathrm{in}, \tilde{w}, $ and $w_{\mathrm{out}}$ denote the \emph{width} of the layer, i.e., they correspond to  the number of input, intermediate, and output features of this layer. 

We now show that a particular class of deep residual CNNs can be interpreted as nonlinear systems of PDEs. 
For ease of notation, we first consider a one-dimensional convolution of a feature with one channel and then outline how the result extends to higher space dimensions and multiple channels.

Assume that the vector $\bfy \in \R^n$ represents a one-dimensional grid function obtained by discretizing $y: [0,1]\to\R$ at the cell-centers of a regular grid with $n$ cells and a mesh size $h = 1/n$, i.e., for $i=1,2,\ldots,n$
\begin{equation*}
	\bfy = [y(x_1),\ldots,y(x_n)]^{\top} \quad \text{ with} \quad 	 x_i = \left(i-\hf\right) h.
\end{equation*} 
Assume, e.g., that the operator $\bfK_1 = \bfK_1(\bftheta) \in \R^{n\times n}$ in~\eqref{eq:unit} is parameterized by the stencil $\bftheta \in \R^3$. 
Applying a coordinate change, we see that
\begin{align*}
    \bfK_1(\bftheta) \bfy  & =  [\bftheta_{1}\, \bftheta_{2}\, \bftheta_{3}] * \bfy  \\
                       & = \left({\frac {\bfbeta_{1}}{4}} [1\, 2 \, 1] + {\frac {\bfbeta_{2}}{2h}}[-1 \, 0\, 1] + {\frac {\bfbeta_{3}}{h^{2}}} [-1\, 2 \,-1]\right) * \bfy.
\end{align*}
Here, the weights $\bfbeta\in\R^3$ are given by
\begin{equation*}
\label{abc}
\left(
\begin{array}{rrr}
    \frac 14  & -{\frac 1{2h}}  & -{\frac 1{h^{2}}} \\
    \hf       & 0                    & {\frac 2 {h^{2}}} \\
    \frac 14  & {\frac {1}{2h}}  & -{\frac 1{h^{2}}}
\end{array}
\right)
\begin{pmatrix}
\bfbeta_{1} \\ \bfbeta_{2} \\ \bfbeta_{3}
\end{pmatrix}
= \begin{pmatrix}
\bftheta_{1} \\ \bftheta_{2} \\ \bftheta_{3}
\end{pmatrix},
\end{equation*}
which is a non-singular linear system for any $h>0$. 
We denote by $\bfbeta(\bftheta)$ the unique solution of this linear system.
Upon taking the limit, $h \to 0$, this observation motivates one to parameterize the convolution operator as
\begin{equation*}
    \bfK_1(\bftheta) = \bfbeta_1(\bftheta) + \bfbeta_2(\bftheta) \partial_x + \bfbeta_3 (\bftheta)\partial_{x}^2.
\end{equation*}
The individual terms in the transformation matrix correspond to reaction, convection, diffusion and the bias term in~\eqref{eq:unit} is a source/sink term, respectively. 
Note that higher-order derivatives can be generated by multiplying different convolution operators or increasing the stencil size.

This simple observation exposes the dependence of learned weights on the image resolution, which can be exploited in practice, e.g., by multiscale training strategies~\cite{HaberHolthamRuthotto2017}. 
Here, the idea is to train a sequence of network using a coarse-to-fine hierarchy of image resolutions (often called image pyramid). 
Since both the number of operations and the memory required in training is proportional to the image size, this leads to immediate savings during training but also allows one to coarsen already trained networks to enable efficient evaluation.
In addition to computational benefits, ignoring fine-scale features when training on the coarse grid can also reduce the risk of being trapped in an undesirable local minimum, which is an observation also made in other image processing applications.

Our argument extends to higher spatial dimensions. 
In 2D, e.g., we can relate the $3\times 3$ stencil parametrized by $\bftheta\in\R^9$ to
\begin{equation*}
    \begin{split}
        \bfK_1(\bftheta)  =& \bfbeta_1(\bftheta) + \bfbeta_2(\bftheta) \partial_x + \bfbeta_3(\bftheta) \partial_y \\
                      &+ \bfbeta_4(\bftheta)\partial_x^2 + \bfbeta_5(\bftheta) \partial_y^2 + \bfbeta_6(\bftheta) \partial_{x} \partial_y \\ 
                     & + \bfbeta_7(\bftheta)\partial_x^2\partial_y + \bfbeta_8(\bftheta) \partial_x\partial_y^2 + \bfbeta_9(\bftheta) \partial_x^2 \partial_y^2.
    \end{split}
\end{equation*}
To obtain a fully continuous model for the layer in~\eqref{eq:unit}, we proceed the same way with $\bfK_2$. 
In view of\eqref{eq:K}, we note that when the number of input and output channels is larger than one, $\bfK_1$ and $\bfK_2$ lead to a system of coupled partial differential operators.

Given the continuous space-time interpretation of CNN we view the optimization problem~\eqref{eq:opt} as an optimal control problem and, similarly, see learning as a parameter estimation problem for the time-dependent nonlinear PDE~\eqref{eq:ODE1}.
Developing efficient numerical methods for solving PDE-constrained optimization problems arising in optimal control and parameter estimation has been a fruitful research endeavor and led to many advances in science and engineering (for recent overviews see, e.g.,~\cite{BorzSchulz2012,HerzogKunisch2010,BieglerEtAl2007}). 
Using the theoretical and algorithmic framework of optimal control in machine learning applications has gained some traction only recently (e.g., \cite{E2017,HaberRuthotto2017,Chang2017Reversible,li2017maximum,chen2018neural}).  


\section{Deep Neural Networks motivated by PDEs} 
\label{sec3}
It is well-known that not every time-dependent PDE is stable with respect to perturbations of the initial conditions~\cite{ascherBook}. 
Here, we say that the forward propagation in ~\eqref{eq:ODE1} is stable if  there is a constant $M>0$ {\em independent of $T$} such that
\begin{equation}
    \label{eq:stab}
            \|\bfY(\bftheta,T) - \tilde{\bfY}(\bftheta,T)\|_F \leq M \|\bfY(0) - \tilde{\bfY}(0)\|_F,
\end{equation}
where $\bfY$ and $\tilde{\bfY}$ are solutions of~\eqref{eq:ODE1} for different initial values and $\|\cdot\|_F$ is the Frobenius norm.
The stability of the forward propagation depends on the values of the weights $\bftheta$ that are chosen by solving~\eqref{eq:opt}. 
In the context of learning, the stability of the network is critical to provide robustness to small perturbations of the input images.
In addition to image noise, perturbations could also be added deliberately to mislead the network's prediction by an adversary. There is some recent evidence showing the existence of such perturbations that reliably mislead deep networks by being barely noticeable to a human observer (e.g.,~\cite{Goodfellow:2014tl,PeiEtAl2017,MoosaviDezfooli:ud}).

To ensure the stability of the network for all possible weights, we propose to restrict the space of CNNs. 
As examples of this general idea, we present three new types of residual CNNs that are motivated by parabolic and first- and second-order hyperbolic PDEs, respectively. 
The construction of our networks guarantees that the networks are stable forward and, for the hyperbolic network, stable backward in time.

Though it is common practice to model $\bfK_1$ and $\bfK_2$ in~\eqref{eq:unit} independently, we note that it is, in general, hard to show the stability of the resulting network. 
This is because, the Jacobian of $\bfF(\bftheta,\bfY)$ with respect to the features has the form
\begin{equation*}
    \bfJ_{\bfY} \bfF = \bfK_2(\bftheta) \ {\rm diag}(\sigma'(\bfK_1(\bftheta \bfY)))\ \bfK_1(\bftheta),
\end{equation*} 
where $\sigma'$ denotes the derivatives of the pointwise nonlinearity and for simplicity we assume $\calN(\bfY)=\bfY$. Even in this simplified setting, the spectral properties of $\bfJ_{\bfY}$, which impact the stability, are unknown for arbitrary choices of $\bfK_1$ and $\bfK_2$.
    
As one way to obtain a stable network, we introduce a symmetric version of the layer in~\eqref{eq:unit} by choosing $\bfK_2 = - \bfK_1^\top$ in~\eqref{eq:unit}. 
To simplify our notation, we drop the subscript of the operator and define the symmetric layer
\begin{equation}\label{eq:Fsym}
    \bfF_{\text{sym}}(\bftheta,\bfY) = -\bfK(\bftheta)^{\top} \sigma\left(\calN(\bfK(\bftheta) \bfY,\bftheta)\right).
\end{equation}

It is straightforward to verify that this choice leads to a negative semi-definite Jacobian for any non-decreasing activation function. As we see next, this choice also allows us to link the discrete network to different types of PDEs.

\subsection{Parabolic CNN} 
\label{sub:parabolic_cnn}
We define the parabolic CNN by using the symmetric layer from~\eqref{eq:Fsym} in the forward propagation, i.e., in the standard ResNet we replace the dynamic in~\eqref{eq:ODE1} by
\begin{equation}\label{eq:parabolic}
    \partial_t \bfY(\bftheta,t) = \bfF_{\mathrm{sym}}(\bftheta(t),\bfY(t)), \enskip \text{ for } t \in (0,T].
\end{equation}
Note that~\eqref{eq:parabolic} is equivalent to the heat equation if $\sigma(x) =x $, $\calN(\bfY)= \bfY$ and $\bfK(t) = \nabla$. This motivates us to refer to this network as a parabolic CNN. 
Nonlinear parabolic PDEs are widely used, e.g., to filter images~\cite{PeronaMalik1990,Weickert2009,ChenPock2017} and our interpretation implies that the networks can be viewed as an extension of such methods.

The similarity to the heat equation motivates us to introduce a new normalization layer motivated by total variation denoising.  For a single example $\bfy\in\R^n$ that can be grouped into $c$ channels, we define
\begin{equation}\label{eq:Ntv}
    \calN_{\rm tv}(\bfy) = {\rm diag}\left( \frac{1}{ \bfA^\top \sqrt{ \bfA (\bfy^2) + \epsilon}}  \right) \bfy,
\end{equation}
where the operator $\bfA \in \R^{n/c \times n}$ computes the sum over all $c$ channels for each pixel, the square, square root, and the division are defined component-wise, $0<\epsilon \ll 1$ is fixed.
 As for the batch norm layer, we implement $\calN_{\rm tv}$ with trainable weights corresponding to global scaling factors and biases for each channel. In the case that the convolution is reduced to a discrete gradient, $ \calN_{\rm tv}$ leads to the regular dynamics in TV denoising.

\paragraph{Stability.} 
\label{par:stability_}
Parabolic PDEs have a well-known decay property that renders them robust to perturbations of the initial conditions. 
For the parabolic CNN in~\eqref{eq:parabolic} we can show the following stability result.
\begin{theorem}\label{theo1}
    If the activation function $\sigma$ is monotonically non-decreasing, then the forward propagation through a parabolic CNN satisfies~\eqref{eq:stab}.
\end{theorem}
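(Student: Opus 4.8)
The plan is to run a Lyapunov (energy) argument on the difference of two trajectories. Let $\bfY$ and $\tilde\bfY$ solve the parabolic dynamics~\eqref{eq:parabolic} for the \emph{same} weights $\bftheta(t)$ but different initial values $\bfY_0$ and $\tilde\bfY_0$, set $\bfE(t) = \bfY(\bftheta,t) - \tilde\bfY(\bftheta,t)$, and study the scalar function $\phi(t) = \hf\|\bfE(t)\|_F^2$. The goal is to show $\phi$ is non-increasing, which immediately gives~\eqref{eq:stab} with the $T$-independent constant $M = 1$.

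As in the Jacobian computation preceding the theorem, I would first take $\calN$ to be the identity, so that $\bfF_{\mathrm{sym}}(\bftheta,\bfY) = -\bfK(\bftheta)^\top \sigma(\bfK(\bftheta)\bfY)$, and invoke standard ODE existence theory (Picard iteration) to guarantee that $\bfY,\tilde\bfY$ are well defined and differentiable in $t$ whenever $\sigma$ is Lipschitz (e.g.\ ReLU or $\tanh$); this makes the energy manipulation rigorous. Differentiating,
\begin{equation*}
  \phi'(t) = {\rm tr}\big(\bfE^\top \partial_t \bfE\big)
           = \big\langle \bfE,\; \bfF_{\mathrm{sym}}(\bftheta,\bfY) - \bfF_{\mathrm{sym}}(\bftheta,\tilde\bfY)\big\rangle_F .
\end{equation*}
Substituting the symmetric layer and using $\langle \bfA,\bfK^\top\bfB\rangle_F = \langle \bfK\bfA,\bfB\rangle_F$ gives
\begin{equation*}
  \phi'(t) = -\big\langle \bfK(\bftheta)\bfE,\; \sigma(\bfK(\bftheta)\bfY) - \sigma(\bfK(\bftheta)\tilde\bfY)\big\rangle_F .
\end{equation*}

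The key step is the monotonicity of $\sigma$: writing $\bfZ = \bfK(\bftheta)\bfY$ and $\tilde\bfZ = \bfK(\bftheta)\tilde\bfY$, so that $\bfK(\bftheta)\bfE = \bfZ - \tilde\bfZ$, each entry satisfies $(\bfZ_{ij} - \tilde\bfZ_{ij})\big(\sigma(\bfZ_{ij}) - \sigma(\tilde\bfZ_{ij})\big) \ge 0$ because $\sigma$ is non-decreasing and applied component-wise; summing over all entries shows the inner product above is non-negative, hence $\phi'(t) \le 0$ on $(0,T]$. Integrating from $0$ to $T$ yields $\|\bfE(T)\|_F \le \|\bfE(0)\|_F$, which is exactly~\eqref{eq:stab}. (Equivalently, one can read off the same conclusion from the negative semi-definiteness of $\bfJ_{\bfY}\bfF_{\mathrm{sym}} = -\bfK^\top \mathrm{diag}(\sigma'(\bfK\bfY))\bfK$ noted just before the theorem.)

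The main obstacle I anticipate is a genuinely nonlinear normalization layer $\calN$, such as $\calN_{\rm tv}$ in~\eqref{eq:Ntv}: the step producing a non-negative inner product needs $z \mapsto \sigma(z)$ monotone, whereas for the general layer one would need the composed map $z \mapsto \sigma(\calN(z))$ to be monotone, or one restricts to the regime in which $\calN$ acts as a fixed affine diagonal scaling so that the argument goes through verbatim. A secondary, technical point is justifying differentiability of the trajectories and the interchange of differentiation with the inner product; this is handled by the Lipschitz assumption on $\sigma$, and if $\sigma$ is only Lipschitz (not $C^1$) one works with the integral form of the ODE or an almost-everywhere argument.
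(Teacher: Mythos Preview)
Your proposal is correct and follows essentially the same argument as the paper: both drop $\calN$ to the identity, differentiate $\|\bfY-\tilde\bfY\|_F^2$, use the adjoint to move $\bfK^\top$ across the inner product, and invoke componentwise monotonicity of $\sigma$ to conclude $\partial_t\|\bfY-\tilde\bfY\|_F^2\le 0$, then integrate. Your write-up is in fact more explicit (spelling out the entrywise inequality and the existence/Lipschitz technicality), and your caveat about a genuinely nonlinear $\calN$ is well placed---the paper only claims the extension for affine scaling and bias, which is the regime in which your argument goes through verbatim.
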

\begin{proof}
    For ease of notation, we assume that no normalization layer is used, i.e., $\calN(\bfY) = \bfY$ in \eqref{eq:parabolic}. 
    We then show that     $\bfF_{\mathrm{sym}}(\bftheta(t),\bfY)$
    is a monotone operator. 
    Note that for all $t \in [0,T]$
    \begin{eqnarray*}
         -\left( \sigma(\bfK(t) \bfY) - \sigma(\bfK(t) \bfY_\epsilon),  \bfK(t) (\bfY- \bfY_{\epsilon}) \right)
        \leq  0.
    \end{eqnarray*}
    Where $(\cdot,\cdot)$ is the standard inner product and the inequality follows from the monotonicity of the activation function, which shows that 
    \begin{equation*}
        \partial_t \|\bfY(t) - \bfY_{\epsilon}(t)\|_F^2 \leq 0.
    \end{equation*}
    Integrating this inequality over $[0,T]$ yields stability as in~\eqref{eq:stab}. 
    The proof extends straightforwardly to cases when a normalization layer with scaling and bias is included.
\end{proof}

One way to discretize the parabolic forward propagation~\eqref{eq:parabolic} is using the forward Euler method. 
Denoting the time step size by $\delta_t > 0$ this reads
\begin{equation*}
    \bfY_{j+1} = \bfY_{j} + \delta_t \bfF_{\mathrm{sym}}(\bftheta(t_j), \bfY_j), \enskip j=0,1,\ldots,N-1,
\end{equation*}
where $ t_j = j \delta_t$.
The discrete forward propagation of a given example $\bfy_0$ is stable if $\delta_t$ satisfies
\begin{equation*}
    \max_{i=1,2,\ldots,n} \left| 1+ \delta_t  \lambda_i(\bfJ(t_j))\right| \le 1, \enskip j=0,1,\ldots,N-1,
\end{equation*}
 and accurate if $\delta_t$ is chosen small enough to capture the dynamics of the system. 
Here, $\lambda_i(\bfJ(t_j))$ denotes the $i$th eigenvalue of the Jacobian of $\bfF_{\mathrm{sym}}$ with respect to the features at a time point $t_j$.
If we assume, for simplicity, that no normalization layer is used, the Jacobian is
\begin{align*}
    \bfJ(t_j) =  - \bfK^{\top}(&\bftheta^{(1)}(t_j))\  \bfD(t_j)  \bfK(\bftheta^{(1)}(t_j)),\\
    \text{ with}\quad&\bfD(t) = \diag\left( \sigma'\left(\bfK(\bftheta^{(1)}(t)) \bfy(t) \right)\ \right).
\end{align*}
If the activation function is monotonically nondecreasing, then $\sigma'(\cdot) \geq 0$  everywhere. 
In this case, all eigenvalues of $\bfJ(t_j)$ are real and bounded above by zero since $\bfJ(t_j)$ is also symmetric. 
Thus, there is an appropriate $\delta_t$ that renders the discrete forward propagation stable.
In our numerical experiments, we aim at ensuring the stability of the discrete forward propagation by limiting the magnitude of elements in $\bfK$ by adding bound constraints to the optimization problem~\eqref{eq:opt}.



\subsection{Hyperbolic CNNs} 
\label{sec4}
Different types of networks can be obtained by considering hyperbolic PDEs.
In this section, we present two CNN architectures that are inspired by hyperbolic systems. 
A favorable feature of hyperbolic equations is their reversibility. 
Reversibility allows us to avoid storage of intermediate network states, thus achieving higher memory efficiency. This is particularly important for very deep networks where memory limitation can hinder training (see \cite{GomezEtAl2017} and~\cite{Chang2017Reversible}).

\paragraph{Hamiltonian CNNs.} 
\label{par:hamiltonian_cnns}
Introducing an auxiliary variable $\bfZ$ (i.e., by partitioning the channels of the original features), we consider the dynamics
\begin{equation*}
\label{equ:two-layer-hamiltonian-simple}
\begin{aligned}
\partial_t{\mathbf{Y}}(t) &= - \bfF_{\text{sym}}(\bftheta^{(1)}(t),\bfZ(t)), \enskip &\bfY(0)=\bfY_0 \\ 
\partial_t{\mathbf{Z}}(t) &= \bfF_{\text{sym}}(\bftheta^{(2)}(t), \bfY(t)), \enskip &\bfZ(0)=\bfZ_0. 
\end{aligned}
\end{equation*}
We showed in~\cite{Chang2017Reversible} that the eigenvalues of the associated Jacobian are imaginary.
When assuming that $\bftheta^{(1)}$ and $\bftheta^{(2)}$ change sufficiently slow in time stability as defined in~\eqref{eq:stab} is obtained.  
A more precise stability result can be established by analyzing the kinematic eigenvalues of the forward propagation~\cite{amr}.

We discretize the dynamic using the symplectic Verlet integration (see, e.g., \cite{ascherBook} for details)
\begin{equation}
    \label{eqn:hamiltonian-discretized}
    \begin{aligned}
    \mathbf{Y}_{j+1} &= \mathbf{Y}_{j} + \delta_t \bfF_{\mathrm{sym}} (\bftheta^{(1)}(t),\bfZ_j), \\
    \mathbf{Z}_{j+1} &= \mathbf{Z}_{j} - \delta_t \bfF_{\mathrm{sym}}(\bftheta^{(2)}(t),\bfY_{j+1}),
    \end{aligned}
\end{equation} 
for $j=0,1,\ldots,N-1$ using a fixed step size $\delta_t > 0$.
This dynamic is \emph{reversible}, i.e., given $\bfY_{N}, \bfY_{N-1}$ and $\bfZ_N, \bfZ_{N-1}$ it can also be computed backwards
\begin{equation*}
    \begin{aligned}
        \mathbf{Z}_{j} &= \mathbf{Z}_{j+1} + \delta_t \bfF_{\mathrm{sym}}(\bftheta^{(2)}(t),\bfY_{j+1})\\
    \mathbf{Y}_{j} &= \mathbf{Y}_{j+1} -  \delta_t \bfF_{\mathrm{sym}}(\bftheta^{(1)}(t),\bfY_{j+1}),\enskip  \\
\end{aligned}
\end{equation*}
for $j=N-1,N-2,\ldots,0$. These operations are numerically stable for the Hamiltonian CNN (see~\cite{Chang2017Reversible} for details).

\paragraph{Second-order CNNs.} 
\label{par:second_order_cnns}
An alternative way to obtain hyperbolic CNNs is by using a second-order dynamics
\begin{equation} \label{eq:secondOrder}
    \begin{aligned}
    \partial_{t}^2 \bfY(t) &  = \bfF_{\text{sym}}(\bftheta(t),\bfY(t)), \\
    \bfY(0) &= \bfY_0,\enskip \partial_t \bfY(0) = 0.
    \end{aligned}
\end{equation}
The resulting forward propagation is associated with a nonlinear version of the telegraph equation \cite{rogers1984wave}, which describes the propagation of signals through networks. 
Hence, one could claim that second-order networks better mimic biological networks and are therefore more appropriate than first-order networks for approaches that aim at imitating the propagation through biological networks. 

We discretize the second-order network using the Leapfrog method. For $j=0,1,\ldots, N-1$ and $\delta_t>0$ fixed this reads
\begin{equation*}
    \bfY_{j+1} = 2\bfY_{j}-\bfY_{j-1} + \delta_t^2 \bfF_{\text{sym}}(\bftheta(t_j),\bfY_j).
\end{equation*}
We set $\bfY_{-1} = \bfY_0$ to denote the initial condition.
Similar to the symplectic integration in~\eqref{eqn:hamiltonian-discretized}, this scheme is reversible.

We show that the second-order network is stable in the sense of~\eqref{eq:stab} when we assume stationary weights.
Weaker results for the time-dependent dynamic are possible assuming $\partial_t \bftheta(t)$ to be bounded. 
\begin{theorem}
    Let $\bftheta(t)$ be constant in time and assume that the activation function satisfies $|\sigma(x)| \leq |x|$ for all $x$. Then, the forward propagation through the second-order network satisfies~\eqref{eq:stab}.
\end{theorem}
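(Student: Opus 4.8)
The plan is to reproduce for the second-order dynamics the energy argument that makes the linear wave and telegraph equations stable, using the hypothesis $|\sigma(x)|\le|x|$ precisely to keep the nonlinear ``potential energy'' comparable to a quadratic form. Exactly as in the proof of Theorem~\ref{theo1}, I would first reduce to $\calN(\bfY)=\bfY$, and since $\bftheta$ is constant in time I drop the time argument and write \eqref{eq:secondOrder} as $\partial_t^2\bfY=-\bfK^\top\sigma(\bfK\bfY)$ with $\partial_t\bfY(0)=0$. Viewing $(\bfY,\partial_t\bfY)$ as a first-order system exhibits it as a conservative system: with $\phi(x)=\int_0^x\sigma(s)\,ds$ and $\Phi(\bfU):=\sum_{i,j}\phi(\bfU_{ij})$, differentiation in $t$ shows that the energy $E(t)=\tfrac12\|\partial_t\bfY(t)\|_F^2+\Phi(\bfK\bfY(t))$ is constant along trajectories, because $\big(\bfK^\top\sigma(\bfK\bfY),\partial_t\bfY\big)=\big(\sigma(\bfK\bfY),\partial_t(\bfK\bfY)\big)=\tfrac{d}{dt}\Phi(\bfK\bfY)$.

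The hypothesis enters through a two-sided bound on $\phi$: from $|\sigma(s)|\le|s|$ one gets $|\phi(x)|\le x^2/2$, and together with the monotonicity of $\sigma$ (which is in force throughout, and which forces $\sigma(0)=0$ and hence $\phi\ge0$) this gives $0\le\Phi(\bfU)\le\tfrac12\|\bfU\|_F^2$. Since $\partial_t\bfY(0)=0$, conservation of $E$ then yields, for every $t$, $\tfrac12\|\partial_t\bfY(t)\|_F^2\le E(0)=\Phi(\bfK\bfY_0)\le\tfrac12\|\bfK\|^2\|\bfY_0\|_F^2$, i.e.\ a velocity bound that is uniform in $t$, hence in $T$. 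The identical bound holds for the second trajectory $\tilde\bfY$, so $\|\partial_t(\bfK\bfY)\|_F$ and $\|\partial_t(\bfK\tilde\bfY)\|_F$ are uniformly bounded along the flow.

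To pass from velocities to the difference $\bfE=\bfY-\tilde\bfY$ (for which $\partial_t\bfE(0)=0$ and $\bfE(0)=\bfY_0-\tilde\bfY_0$) I would linearize: by the fundamental theorem of calculus $\sigma(\bfK\bfY)-\sigma(\bfK\tilde\bfY)=\bfD(t)\,\bfK\bfE$ with $\bfD(t)=\diag\!\big(\int_0^1\sigma'(\,\cdot\,)\,ds\big)$ symmetric positive semidefinite (monotonicity; for non-smooth $\sigma$ such as ReLU one first mollifies and passes to the limit), so $\bfE$ solves the linear time-varying wave-type equation $\partial_t^2\bfE=-\bfK^\top\bfD(t)\bfK\bfE$ with zero initial velocity. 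On $\ker\bfK$ the right-hand side vanishes, so $\bfE$ stays at its constant, zero-velocity initial value there; on the orthogonal complement I would track the instantaneous energy $E_\Delta(t)=\tfrac12\|\partial_t\bfE\|_F^2+\tfrac12\big(\bfK^\top\bfD(t)\bfK\bfE,\bfE\big)\ge0$, whose derivative equals $\tfrac12\big(\dot\bfD(t)\bfK\bfE,\bfK\bfE\big)$, estimate $\dot\bfD$ through $\|\sigma''\|_\infty$ and the uniformly bounded velocities $\partial_t(\bfK\bfY),\partial_t(\bfK\tilde\bfY)$ from the previous step, and close the estimate by Grönwall on $E_\Delta$ relative to the slowly varying quadratic form, thereby obtaining \eqref{eq:stab}.

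The main obstacle is this last step. Unlike in Theorem~\ref{theo1}, there is no single clean monotonicity identity: the ``detuning'' matrix $\bfD(t)$ genuinely varies in time (it oscillates together with the bounded-but-not-constant features), so one must rule out parametric-resonance-type growth and get a bound on $\bfE(t)$ that does not degrade as $T\to\infty$. This forces one to combine, carefully, the conservative structure, the uniform velocity bound of the second paragraph, and a uniform bound $\bfD(t)\preceq\bfI$ — the latter being where one really wants the differential form of $|\sigma(x)|\le|x|$, namely $0\le\sigma'\le1$, which does hold for ReLU and $\tanh$. A secondary subtlety is that $\bfK$ may be rank-deficient, so the kernel direction (on which the dynamics is trivial) and its complement (on which $\bfK^\top\bfD(t)\bfK$ need not be uniformly coercive, since $\sigma'$ may vanish) must be treated separately and the constant $M$ assembled from the two contributions.
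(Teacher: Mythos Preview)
Your route differs from, and is more careful than, the paper's. The paper does not use exact conservation of the energy $E(t)=\tfrac12\|\partial_t\bfy\|^2+\Phi(\bfK\bfy)$ with $\Phi(\bfU)=\sum_{i,j}\int_0^{\bfU_{ij}}\sigma$; instead it writes the (non-conserved) quantity $\mathcal{E}(t)=\tfrac12\|\partial_t\bfy\|^2+\tfrac12(\bfK\bfy)^\top\sigma(\bfK\bfy)$ and asserts the pointwise bound $\mathcal{E}(t)\le\mathcal{E}_{\rm lin}(t)$, where $\mathcal{E}_{\rm lin}$ is the conserved quadratic energy of the auxiliary \emph{linear} problem $\partial_t^2\bfu=-\bfK^\top\bfK\bfu$ with the same initial data. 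It then handles the difference of two trajectories in a single sentence, ``applying this argument to the initial condition $\bfy_0-\bfy_\epsilon$,'' without addressing that the equation is nonlinear and hence $\bfy-\bfy_\epsilon$ is not a solution. Both of the paper's steps are looser than yours: the comparison $\mathcal{E}(t)\le\mathcal{E}_{\rm lin}(t)$ relates functionals of two \emph{different} trajectories and does not follow from $|\sigma(x)|\le|x|$ alone, and the last line is precisely the difficulty you have correctly isolated.

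Your linearization $\partial_t^2\bfE=-\bfK^\top\bfD(t)\bfK\bfE$ with $0\preceq\bfD(t)\preceq\bfI$ is the honest way to attack the difference, and the obstacle you flag --- that a Gr\"onwall argument on $E_\Delta$ picks up $T$-dependence through $\dot\bfD$, i.e.\ potential parametric resonance --- is real. The paper does not resolve it; it sidesteps it via the one-line reduction above. So the gap you diagnose in your own proposal is equally present (though less visibly) in the paper's proof. Your exact-conservation step and the resulting uniform velocity bound are correct and strictly sharper than what the paper obtains, but neither argument, as written, delivers a $T$-independent constant for $\|\bfY(T)-\tilde\bfY(T)\|_F$ without additional hypotheses or further work.
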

\begin{proof}
    For brevity, we denote $\bfK = \bfK(\bftheta(t))$ and consider the forward propagation of a single example. 
    Let $\bfy:[0,T]\to\R^n$ be a solution to~\eqref{eq:secondOrder} and consider the energy
    \begin{eqnarray}
    \label{energy}
    {\cal E}(t) =  \hf \left( (\partial_t\bfy(t))^{\top} \partial_t\bfy(t) + (\bfK \bfy(t))^{\top} \sigma(\bfK \bfy(t))  \right).
    \end{eqnarray}
    Given that $|\sigma(x)| \leq |x|$ for all $x$ by assumption, this energy can be bounded as follows
    \begin{align*}
    {\cal E}(t) & \leq {\cal E}_{\rm lin}(t)\\
                & = \hf \left( (\partial_t \bfu(t))^{\top} \partial_t\bfu(t) + (\bfK \bfu(t))^{\top} (\bfK \bfu(t))  \right),
    \end{align*}
    where ${\cal E}_{\rm lin}$ is the energy associated with the linear wave-like hyperbolic equation
    $$ \partial_t^2 \bfu(t) = -\bfK^{\top}\bfK \bfu(t), \enskip \bfu(0)=\bfy_0, \enskip \partial_t \bfu(0)=0. $$
    Since by assumption $\bfK$ is constant in time, we have that
    \begin{align*}
     \partial_t{\cal E}_{\rm lin}(t) =  \partial_t \bfu(t)^\top \left( \partial_t^2 \bfu(t) +  \bfK^{\top}\bfK \bfu(t) \right) 
      = 0. 
    \end{align*}
    Thus, the energy of the hyperbolic network in~\eqref{energy} is positive and bounded from above
    by the energy of the linear wave equation. 
    Applying this argument to the initial condition $\bfy_0 - \bfy_{\epsilon}$ we derive~\eqref{eq:stab} and thus the forward propagation is stable.
\end{proof}


\section{Regularization } 
\label{sec5}
The proposed continuous interpretation of the CNNs also provides new perspectives on regularization. 
To enforce stability of the forward propagation, the linear operator $\bfK$ in~\eqref{eq:Fsym} should not change drastically in time. 
This suggests adding a smoothness regularizer in time.
In~\cite{HaberRuthotto2017} a $H^1$-seminorm was used to smooth kernels over time with the goal to avoid overfitting.
A theoretically more appropriate function space consists of all kernels that are piecewise smooth in time. 
To this end, we introduce the regularizer
\begin{equation}\label{eq:TV}
    \begin{split}
    R(\bftheta,\bfW,\bfmu) = &  \alpha_1 \int_0^T \phi_\tau( \partial_t \bftheta(t)) dt \\
                          + & \frac{\alpha_2}{2} \left( \int_0^T \|\bftheta(t)\|^2 dt+ \| \bfW\|_F^2 + \|\bfmu\|^2\right),
    \end{split}
\end{equation}
where the function $\phi_{\epsilon}(x) = \sqrt{x^2 + \tau}$ is a smoothed $\ell_1$-norm with conditioning parameter $\tau>0$. The first term of $R$ can be seen as a total variation~\cite{RudinOsherFatemi1992} penalty in time that favors piecewise constant dynamics. Here, $\alpha_1, \alpha_2\geq 0$ are regularization parameters that are assumed to be fixed.

A second important aspect of stability is to keep the time step sufficiently small.
Since $\delta_t$ can be absorbed in $\bfK$ we use the box constraint
$-1\le \bftheta^{(1)}(t_j) \le 1$ for all $j$, and fix the time step size to $\delta_t=1$ in our numerical experiments.


\section{Numerical Experiments} 
\label{sec6}
%

We demonstrate the potential of the proposed architectures using the common image classification benchmarks STL-10~\cite{CoatesEtAl2011},  CIFAR-10, and CIFAR-100~\cite{krizhevsky2009learning}.
Our central goal is to show that, despite their modeling restrictions, our new network types achieve competitive results.
We use our basic architecture for all experiments, do not excessively tune hyperparameters individually for each case, and employ a simple data augmentation technique consisting of random flipping and cropping.

\paragraph{Network Architecture.} Our architecture is similar to the ones in~\cite{he2016deep,Chang2017Reversible} and contains an opening layer, followed by several blocks each containing a few time steps of a ResNet and a connector that increases the width of the CNN and coarsens the images.
Our focus is on the different options for defining the ResNet block using parabolic and hyperbolic networks.
To this end, we choose the same basic components for the opening and connecting layers.
The opening layer increases the number of channels from three (for RGB image data) to the number of channels of the first ResNet using convolution operators with $3\times3$ stencils, a batch normalization layer and a ReLU activation function. 
We build the connecting layers using $1\times1$ convolution operators that increase the number of channels, a batch normalization layer, a ReLU activation, and an average pooling operator that coarsens the images by a factor of two.
Finally, we obtain the output features $\bfY(\bftheta)$ by averaging the features of each channel to ensure translation-invariance. 
The ResNet blocks use the symmetric layer~\eqref{eq:Fsym} including the total variation normalization~\eqref{eq:Ntv} with $\epsilon=10^{-3}$.
The classifier is modeled using a fully-connected layer, a softmax transformation, and a cross-entropy loss. 

\paragraph{Training Algorithm. }
In order to estimate the weights, we use a standard stochastic gradient descent (SGD) method with momentum of $0.9$. We use a piecewise constant step size (in this context also called learning rate), starting with 0.1, which is decreased by a factor of 0.2 at a-priori specified epochs. 
For STL-10 and CIFAR-10 examples, we perform 60, 20, and 20 epochs with step sizes of $0.1, 0.02, 0.004$, respectively. 
For the more challenging CIFAR-100 data set, we use 60, 40, 40, 40, and 20 epochs with step sizes of 0.1, 0.02, 0.004, 0.0008, 0.00016, respectively.
In all examples, the SGD steps are computed using mini-batches consisting of 125 randomly chosen examples. 
For data augmentation, we apply a random horizontal flip ($50\%$ probability), pad the images by a factor of 1/16 with zeros into all directions and randomly crop the image by 1/8 of the pixels, counting from the lower-left corner. 
The training is performed using the open-source software Meganet
on a workstation running Ubuntu 16.04 and MATLAB 2018b with two Intel(R) Xeon(R) CPU E5-2620 v4 and 64 GB of RAM. 
We use  NVIDIA Titan X GPU for accelerating the computation through the frameworks CUDA 9.1 and CuDNN 7.0.

\paragraph{Results for STL-10. }
The STL-10 dataset~\cite{CoatesEtAl2011} contains 13,000 digital color images of size $96\times96$ that are evenly divided into ten categories, which can be inferred from Fig.~\ref{fig:CNNresults}.
The dataset is split into 5,000 training and 8,000 test images. 
The STL-10 data is a popular benchmark test for image classification algorithms and challenging due to the relatively small number of training images. 

\begin{figure}[t]
    \begin{center}
        \includegraphics[width=.30\textwidth]{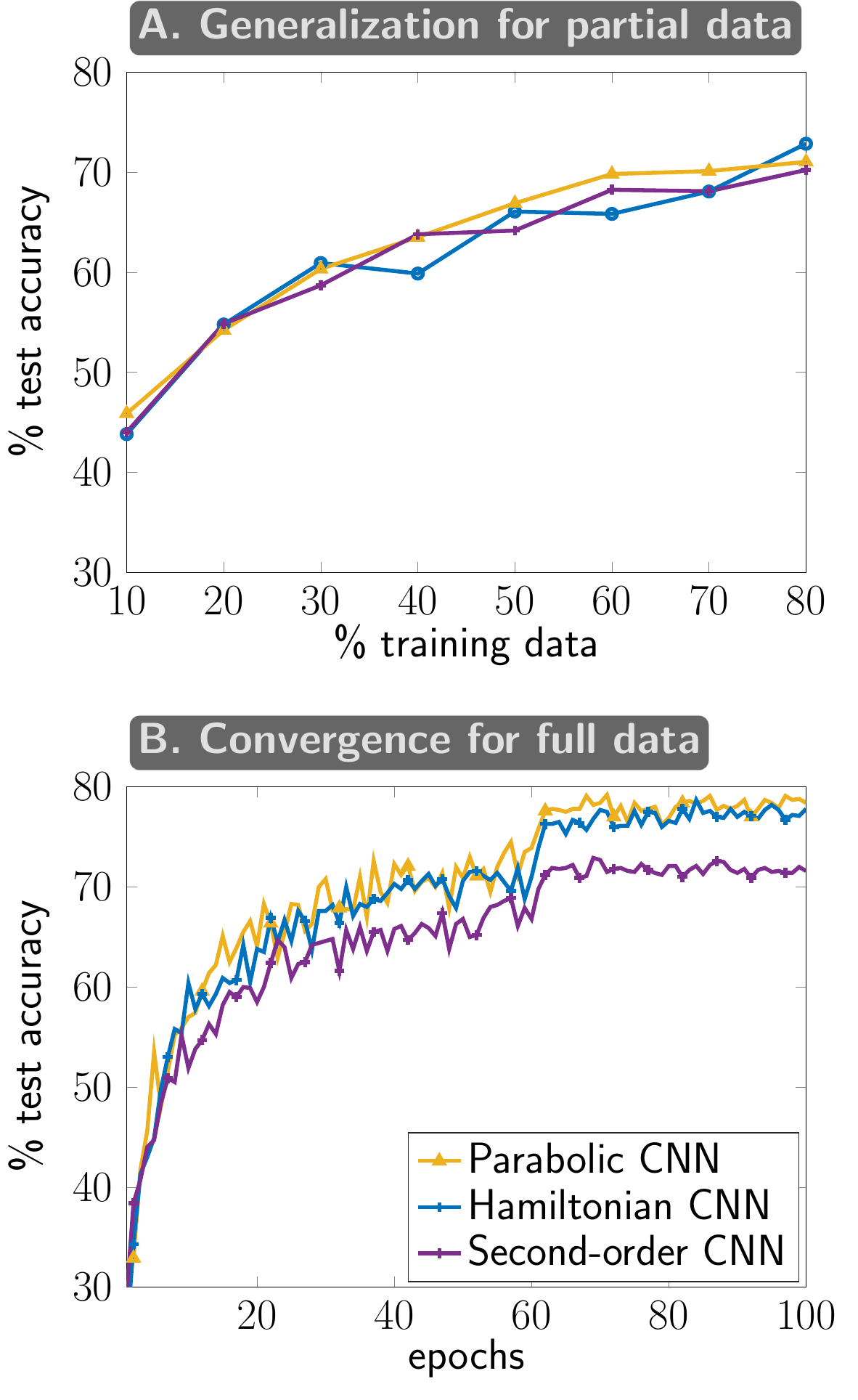}
    \end{center}
    \caption{Performance of the three proposed architectures for the STL-10 dataset. Top: Improvement of test accuracy when increasing the number of training images (10\% to 80\% in increments of 10\%). Bottom: Validation accuracy on remaining 20\% of training examples at every epoch of the stochastic gradient descent method. In this example, the parabolic and first-order hyperbolic architectures outperform the second-order network.}
    \label{fig:results}
\end{figure}
\begin{figure*}
    \begin{center}
    \includegraphics[width=1\textwidth]{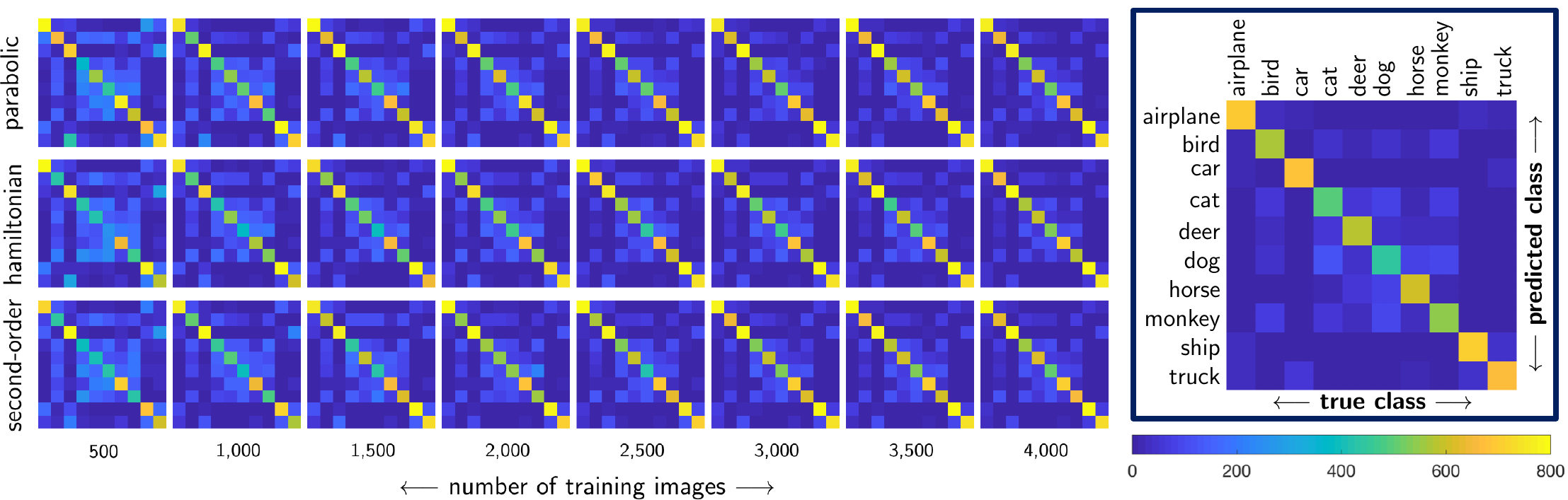}
    \end{center}
    \caption{Confusion matrices for classifiers obtained using the three proposed architectures (row-wise) for an increasing number of training data from the STL-10 dataset (column-wise). The $(i,j)$th element of the $10\times 10$ confusion matrix counts the number of images of class $i$ for which the predicted class is $j$. We use the entire test data set, which contains 800 images per class.}
    \label{fig:confMat}
\end{figure*}

For each dynamic, the network uses four ResNet blocks with 16, 32, 64, and 128 channels and image sizes of $96\times 96$, $48\times48$, $24\times 24$, $12\times 12$, respectively.
Within the ResNet blocks, we perform three time steps with a step size of $\delta_t=1$ and include a total variation normalization layer and ReLU activation. 
This architecture leads to 324,794 trainable weights for the Hamiltonian network and 618,554 weights for the parabolic and second-order network, respectively.
 We note that our network are substantially smaller than commonly used ResNets. For example, the architectures in~\cite{Chang2017Reversible} contain about 2 million parameters. Reducing the number of parameters is important during training and, e.g., when trained networks have to be deployed on devices with limited memory.
The regularization parameters are $\alpha_1 = 4\cdot 10^{-4}$ and $\alpha_2=1\cdot 10^{-4}$.

To show how the generalization improves as more training data becomes available, we train the network with an increasing number of examples that we choose randomly from the training dataset.
We also randomly sample 1,000 examples from the remaining training data to build a validation set, which we use to monitor the performance after each full epoch.
We use no data augmentation in this experiment.
 In all cases, the training accuracy was close to 100\%. After the training, we compute the accuracy of the networks parameterized by the weights that performed best on the validation data for all the 8,000 test images;  see Fig.~\ref{fig:results}.  
The predictions of the three networks may vary for single examples without any apparent pattern (see also Fig.~\ref{fig:CNNresults}).
However, overall their performance and convergence are comparable which leads to similarities in the confusion matrices; see Fig.~\ref{fig:confMat}.

To show the overall performance of the networks, we train the networks using a random partition of the examples into 4,000 training and 1,000 validation data. 
For data augmentation, we use horizontal flipping and random cropping. 
The performance of the networks on the validation data after each epoch can be seen in the bottom plot of Fig.~\ref{fig:results}.  
As before, the optimization found weights that almost perfectly fit the training data.
After the training, we compute the loss and classification accuracy for all the test images. 
For this example, the parabolic and Hamiltonian network perform slightly superior to the second-order network $77.0$\% and  $78.3$\% vs.  $74.3$\% test classification accuracy, respectively. 
It is important to emphasize that the Hamiltonian network achieves the best test accuracy using only about half as many trainable weights as the other two networks.
These results are competitive with the results reported, e.g., in~\cite{ShuoYang:2015vr,Dundar:2015ut}. Fine-tuning of hyperparameters such as step size, number of time steps, and width of the network may achieve additional improvements for each dynamic.  Using these means and performing training on all 5,000 images we achieved a test accuracy of around 85\% in~\cite{Chang2017Reversible}. 

\begin{table*}[t]
	\centering\scriptsize
    \caption{Summary of numerical results for the STL-10, CIFAR-10, and CIFAR-100 datasets. In each experiment, we randomly split the training data into 80\% used to train the weights and 20\% used to validate the performance after each epoch. After training, we compute and report the classification accuracy and the value of cross entropy loss (in brackets) for the test data. We evaluate the performance using the weights with the best classification accuracy on the validation set. We also report the number of trainable weights for each network.}
    \begin{center}
		\setlength{\tabcolsep}{4pt}		
    \begin{tabular}{|c|c|c|c|c|c|c|}
        \hline
        & \multicolumn{2}{|c|}{STL-10}
        & \multicolumn{2}{|c|}{CIFAR-10}
        & \multicolumn{2}{|c|}{CIFAR-100}\\ \cline{2-7}
                        &  number of      & test data (8,000)    & number of       & test data (10,000)   &  number of      & test data (10,000)\\
                        &  weights        & accuracy \%(loss)    &    weights      & accuracy \%(loss)    &    weights      & accuracy \%(loss) \\ \hline\hline
        Parabolic       & 618,554         & 77.0\% (0.711)       & 502,570         & 88.5\% (0.333)       & 652,484         & 64.8\% (1.234)    \\
        Hamiltonian     & 324,794         & 78.3\% (0.789)       & 264,106         & 89.3\% (0.349)       & 362,180         & 64.9\% (1.237)    \\
        Second-order    & 618,554         & 74.3\% (0.810)       & 502,570         & 89.2\% (0.333)       & 652,484         & 65.4\% (1.232)    \\ \hline
    \end{tabular}
    \end{center}
    \label{tab:res}
\end{table*}

\paragraph{Results for CIFAR 10/100. } For an additional comparison of the proposed architectures we use the CIFAR-10 and CIFAR-100 datasets~\cite{krizhevsky2009learning}. Each of these datasets consists of 60,000 labeled RGB images of size $32\times 32$ that are chosen from the 80 million tiny images dataset. In both cases, we use 50,000 images for training and validation and keep the remaining 10,000 to test the generalization of the trained weights. While CIFAR-10 consists of 10 categories the CIFAR-100 dataset contains 100 categories and, thus, classification is more challenging.

Our architectures contain three blocks of parabolic or hyperbolic networks between which the image size is reduced from $32\times 32$ to $8\times 8$. 
For the simpler CIFAR-10 problem, we use a narrower network with $32, 64, 112$ channels while for the CIFAR-100 challenge we use more channels (32, 64, and 128) and add a final connecting layer that increases the number of channels to 256. 
This leads to networks whose number of trainable weights vary between 264,106 and 652,484; see also Table~\ref{tab:res}. 
As regularization parameters we use  $\alpha_1 = 2\cdot 10^{-4}$ and $\alpha_2=2\cdot 10^{-4}$, which is similar to~\cite{Chang2017Reversible}.

As for the STL-10 data set, the three proposed architectures achieved comparable results on these benchmarks; see convergence plots in Figure~\ref{fig:cifar} and test accuracies in Table~\ref{tab:res}. In all cases, the training loss is near zero after the training. 
For these datasets, the second-order network slightly outperforms the other networks.
Additional tuning of the learning rate, regularization parameter, and other hyperparameters may further improve the results shown here. Using those techniques architectures with more time-steps and the entire training dataset we achieved about 5\% higher accuracy on CIFAR-10 and 9\% higher accuracy on CIFAR-100 in~\cite{Chang2017Reversible}.
\begin{figure}[t]
    \begin{center}
        \includegraphics[width=.3\textwidth]{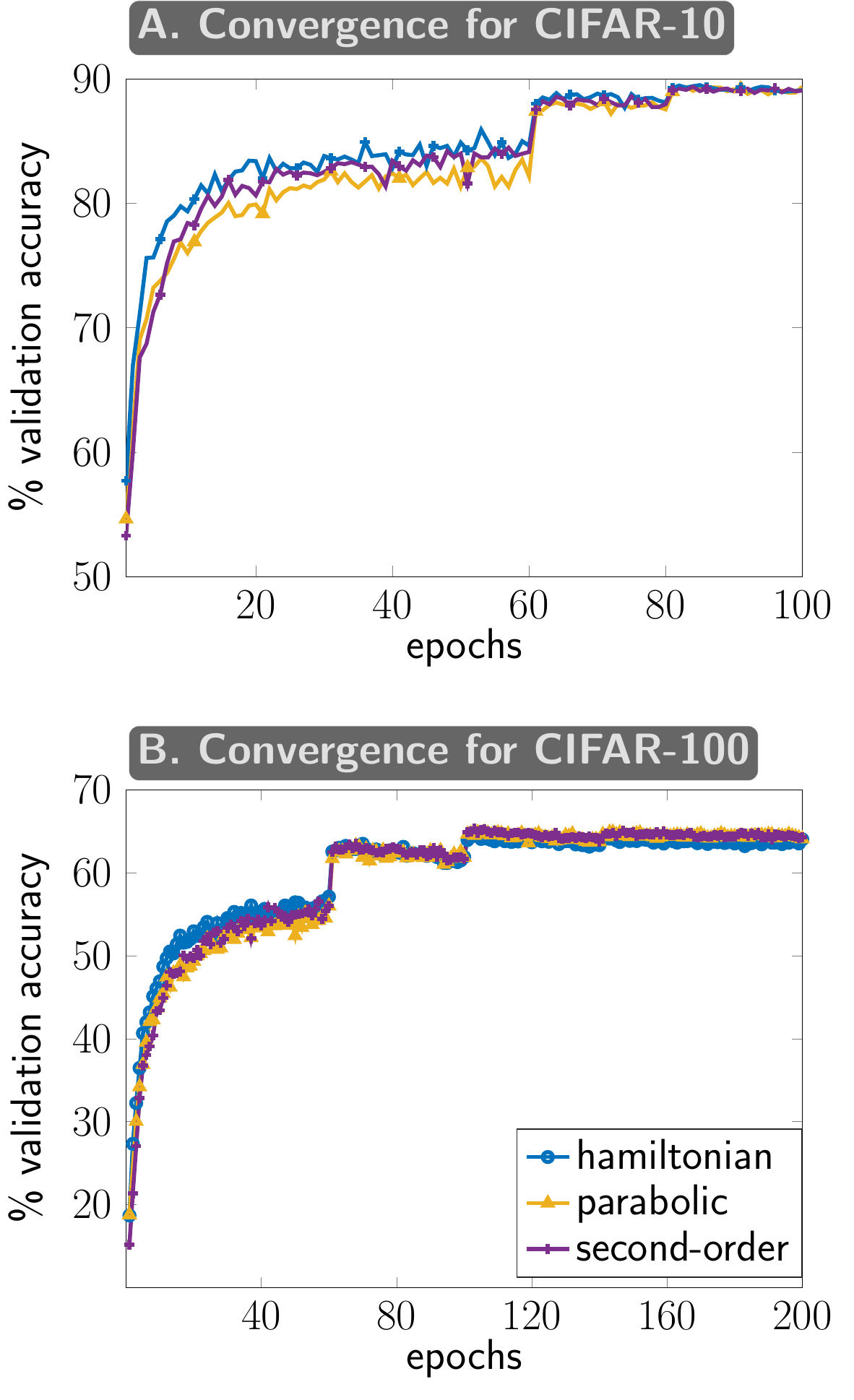}
    \end{center}
    \caption{Performance of the three proposed architectures for the CIFAR-10 (top) and CIFAR-100 (bottom) datasets. Validation accuracy computed on 10,000 randomly chosen images is shown at every epoch of the stochastic gradient descent method. In this example, all architectures perform comparably with the second-order network slightly outperforming the parabolic and first-order hyperbolic architectures.}
    \label{fig:cifar}
\end{figure}

\section{Discussion and Outlook} 
\label{sec7}
In this paper, we establish a link between deep residual convolutional neural networks and PDEs.  
The relation provides a general framework for designing, analyzing, and training those CNNs.
It also exposes the dependence of learned weights on the image resolution used in training.
Exemplarily, we derive three PDE-based network architectures that are forward stable (the parabolic network) and forward-backward stable (the hyperbolic networks). 

It is well-known that different types of PDEs have different properties. 
For example, linear parabolic PDEs have decay properties while linear hyperbolic PDEs conserve energy. 
Hence, it is common to choose different numerical techniques for solving and optimizing different kinds of PDEs.
The type of the underlying PDE is not known a-priori for a standard convolutional ResNet as it depends on the trained weights.
This renders ensuring the stability of the trained network and the choice of adequate time-integration methods difficult. 
These considerations motivate us to restrict the convolutional ResNet architecture a-priori to discretizations of nonlinear PDEs that are stable. 

In our numerical examples, our new architectures lead to an adequate performance despite the constraints on the networks.
In fact, using only networks of relatively modest size, we obtain results that are close to those of state-of-the-art networks with a considerably larger number of weights.
This may not hold in general, and future research will show which types of architectures are best suited for a learning task at hand. 
Our intuition is that, e.g., hyperbolic networks may be preferable over parabolic ones for image extrapolation tasks to ensure the preservation of edge information in the images. 
In contrast to that, we anticipate parabolic networks to perform superior for tasks that require filtering, e.g., image denoising.

We note that our view of CNNs mirrors the developments in PDE-based image processing in the 1990s. PDE-based methods have since significantly enhanced our mathematical understanding of image processing tasks and opened the door to many popular algorithms and techniques. 
We hope that continuous models of CNNs will result in similar breakthroughs and, e.g., help streamline the design of network architectures and improve training outcomes with less trial and error. 

\section*{Acknowledgements} 
	L.R. is supported by the U.S. National Science Foundation (NSF) through awards DMS 1522599 and DMS 1751636 and by the NVIDIA Corporation's GPU grant program. We thank Martin Burger for outlining how to show stability using monotone operator theory and Eran Treister and other contributors of the Meganet package. We also thank the Isaac Newton Institute (INI) for Mathematical Sciences for support and hospitality during the programme on Generative Models, Parameter Learning and Sparsity (VMVW02) when work on this paper was undertaken. INI was supported by EPSRC Grant Number: LNAG/036, RG91310. 

\bibliographystyle{abbrv}
\scriptsize

\begin{thebibliography}{10}

\bibitem{AmbrosioTortorelli1990}
L.~Ambrosio and V.~M. Tortorelli.
\newblock {Approximation of Functionals Depending on Jumps by Elliptic
  Functionals via Gamma-Convergence}.
\newblock {\em Commun. Pure Appl. Math.}, 43(8):999--1036, 1990.

\bibitem{ascherBook}
U.~Ascher.
\newblock {\em Numerical methods for Evolutionary Differential Equations}.
\newblock SIAM, Philadelphia, USA, 2010.

\bibitem{amr}
U.~Ascher, R.~Mattheij, and R.~Russell.
\newblock {\em Numerical Solution of Boundary Value Problems for Ordinary
  Differential Equations}.
\newblock SIAM, Philadelphia, Philadelphia, 1995.

\bibitem{bengio2009learning}
Y.~Bengio et~al.
\newblock Learning deep architectures for {AI}.
\newblock {\em Found. Trends Mach. Learn.}, 2(1):1--127, 2009.

\bibitem{BengioEtAl1994}
Y.~Bengio, P.~Simard, and P.~Frasconi.
\newblock {Learning Long-Term Dependencies with Gradient Descent Is Difficult}.
\newblock {\em IEEE Transactions on Neural Networks}, 5(2):157--166, 1994.

\bibitem{BieglerEtAl2007}
L.~T. Biegler, O.~Ghattas, M.~Heinkenschloss, D.~Keyes, and B.~van
  Bloemen~Waanders, editors.
\newblock {\em {Real-time PDE-constrained Optimization}}.
\newblock Society for Industrial and Applied Mathematics (SIAM), 2007.

\bibitem{BorzSchulz2012}
A.~Borz{\`\i} and V.~Schulz.
\newblock {\em {Computational optimization of systems governed by partial
  differential equations}}, volume~8.
\newblock Society for Industrial and Applied Mathematics (SIAM), Philadelphia,
  PA, 2012.

\bibitem{ChanVese1999}
T.~F. Chan and L.~A. Vese.
\newblock {Active contours without edges}.
\newblock {\em IEEE Trans. Image Process.}, 10(2):266--277, 2001.

\bibitem{Chang2017Reversible}
B.~Chang, L.~Meng, E.~Haber, L.~Ruthotto, D.~Begert, and E.~Holtham.
\newblock Reversible architectures for arbitrarily deep residual neural
  networks.
\newblock In {\em AAAI Conference on AI}, 2018.

\bibitem{ChaudhariEtAl2017}
P.~Chaudhari, A.~Oberman, S.~Osher, S.~Soatto, and G.~Carlier.
\newblock {Deep Relaxation: Partial Differential Equations for Optimizing Deep
  Neural Networks}.
\newblock {\em arXiv preprint 1704.04932}, Apr. 2017.

\bibitem{chen2018neural}
T.~Q. Chen, Y.~Rubanova, J.~Bettencourt, and D.~Duvenaud.
\newblock Neural ordinary differential equations.
\newblock {\em arXiv preprint arXiv:1806.07366}, 2018.

\bibitem{ChenPock2017}
Y.~Chen and T.~Pock.
\newblock {Trainable Nonlinear Reaction Diffusion: A Flexible Framework for
  Fast and Effective Image Restoration.}
\newblock {\em IEEE Trans. Pattern Anal. Mach. Intell.}, 39(6):1256--1272,
  2017.

\bibitem{CoatesEtAl2011}
A.~Coates, A.~Ng, and H.~Lee.
\newblock {An Analysis of Single-Layer Networks in Unsupervised Feature
  Learning}.
\newblock In {\em Proceedings of the 14th International Conference on
  Artificial Intelligence and Statistics}, pages 215--223, June 2011.

\bibitem{Dundar:2015ut}
A.~Dundar, J.~Jin, and E.~Culurciello.
\newblock {Convolutional Clustering for Unsupervised Learning}.
\newblock In {\em ICLR}, Nov. 2015.

\bibitem{E2017}
W.~E.
\newblock {A Proposal on Machine Learning via Dynamical Systems}.
\newblock {\em Comm. Math. Statist.}, 5(1):1--11, 2017.

\bibitem{GomezEtAl2017}
A.~N. Gomez, M.~Ren, R.~Urtasun, and R.~B. Grosse.
\newblock The reversible residual network: Backpropagation without storing
  activations.
\newblock In {\em Adv Neural Inf Process Syst}, pages 2211--2221, 2017.

\bibitem{GoodfellowEtAl2016}
I.~Goodfellow, Y.~Bengio, and A.~Courville.
\newblock {\em {Deep Learning}}.
\newblock MIT Press, Nov. 2016.

\bibitem{Goodfellow:2014tl}
I.~J. Goodfellow, J.~Shlens, and C.~Szegedy.
\newblock {Explaining and Harnessing Adversarial Examples}.
\newblock {\em arXiv.org}, Dec. 2014.

\bibitem{HaberRuthotto2017}
E.~Haber and L.~Ruthotto.
\newblock Stable architectures for deep neural networks.
\newblock {\em Inverse Probl.}, 34:014004, 2017.

\bibitem{HaberHolthamRuthotto2017}
E.~Haber, L.~Ruthotto, and E.~Holtham.
\newblock Learning across scales - {A} multiscale method for convolution neural
  networks.
\newblock In {\em AAAI Conference on AI}, volume abs/1703.02009, pages 1--8,
  2017.

\bibitem{HansenNagyOLeary2006}
P.~C. Hansen, J.~G. Nagy, and D.~P. O'Leary.
\newblock {\em {Deblurring Images: Matrices, Spectra and Filtering}}.
\newblock Matrices, Spectra, and Filtering. SIAM, Philadelphia, USA, 2006.

\bibitem{he2016deep}
K.~He, X.~Zhang, S.~Ren, and J.~Sun.
\newblock Deep residual learning for image recognition.
\newblock In {\em Proceedings of the IEEE Conference on Computer Vision and
  Pattern Recognition}, pages 770--778, 2016.

\bibitem{hernandez2016general}
J.~M. Hern{\'a}ndez-Lobato, M.~A. Gelbart, R.~P. Adams, M.~W. Hoffman, and
  Z.~Ghahramani.
\newblock A general framework for constrained bayesian optimization using
  information-based search.
\newblock {\em J. Mach. Learn. Res.}, 17:2--51, 2016.

\bibitem{HerzogKunisch2010}
R.~Herzog and K.~Kunisch.
\newblock {Algorithms for PDE-constrained optimization}.
\newblock {\em GAMM-Mitteilungen}, 33(2):163--176, Oct. 2010.

\bibitem{hinton2012deep}
G.~Hinton, L.~Deng, D.~Yu, G.~E. Dahl, A.-r. Mohamed, N.~Jaitly, A.~Senior,
  V.~Vanhoucke, P.~Nguyen, T.~N. Sainath, et~al.
\newblock Deep neural networks for acoustic modeling in speech recognition: The
  shared views of four research groups.
\newblock {\em IEEE Signal Process. Mag.}, 29(6):82--97, 2012.

\bibitem{HornSchunck1981}
B.~K. Horn and B.~G. Schunck.
\newblock Determining optical flow.
\newblock {\em Artificial intelligence}, 17(1-3):185--203, 1981.

\bibitem{Ioffe:2015ud}
S.~Ioffe and C.~Szegedy.
\newblock {Batch Normalization: Accelerating Deep Network Training by Reducing
  Internal Covariate Shift}.
\newblock In {\em 32nd International Conference on Machine Learning}, pages
  448--456, Feb. 2015.

\bibitem{krizhevsky2009learning}
A.~Krizhevsky and G.~Hinton.
\newblock Learning multiple layers of features from tiny images.
\newblock 2009.

\bibitem{KrizhevskySutskeverHinton2012}
A.~Krizhevsky, I.~Sutskever, and G.~Hinton.
\newblock Imagenet classification with deep convolutional neural networks.
\newblock {\em Adv Neural Inf Process Syst}, 61:1097–1105, 2012.

\bibitem{LeCunBengio1995}
Y.~LeCun and Y.~Bengio.
\newblock Convolutional networks for images, speech, and time series.
\newblock {\em The handbook of brain theory and neural networks},
  3361:255–258, 1995.

\bibitem{lecun2015deep}
Y.~LeCun, Y.~Bengio, and G.~Hinton.
\newblock Deep learning.
\newblock {\em Nature}, 521(7553):436--444, 2015.

\bibitem{LeCunKavukcuogluFarabet2010}
Y.~LeCun, K.~Kavukcuoglu, and C.~Farabet.
\newblock Convolutional networks and applications in vision.
\newblock {\em IEEE International Symposium on Circuits and Systems: Nano-Bio
  Circuit Fabrics and Systems}, page 253–256, 2010.

\bibitem{li2017maximum}
Q.~Li, L.~Chen, C.~Tai, and E.~Weinan.
\newblock Maximum principle based algorithms for deep learning.
\newblock {\em The Journal of Machine Learning Research}, 18(1):5998--6026,
  2017.

\bibitem{modersitzki2009fair}
J.~Modersitzki.
\newblock {\em FAIR: Flexible Algorithms for Image Registration}.
\newblock Fundamentals of Algorithms. SIAM, Philadelphia, USA, 2009.

\bibitem{MoosaviDezfooli:ud}
S.~M. Moosavi-Dezfooli, A.~Fawzi, O.~F. arXiv, and {2017}.
\newblock {Universal adversarial perturbations}.
\newblock {\em openaccess.thecvf.com}.

\bibitem{MumfordShah1989}
D.~Mumford and J.~Shah.
\newblock {Optimal Approximations by Piecewise Smooth Functions and Associated
  Variational-Problems}.
\newblock {\em Commun. Pure Appl. Math.}, 42(5):577--685, 1989.

\bibitem{PeiEtAl2017}
K.~Pei, Y.~Cao, J.~Yang, and S.~Jana.
\newblock Deepxplore: Automated whitebox testing of deep learning systems.
\newblock In {\em 26th Symposium on Oper. Sys. Princ.}, pages 1--18. ACM Press,
  New York, USA, 2017.

\bibitem{PeronaMalik1990}
P.~Perona and J.~Malik.
\newblock Scale-space and edge detection using anisotropic diffusion.
\newblock {\em IEEE Trans. Pattern Anal. Mach. Intell.}, 12(7):629--639, 1990.

\bibitem{RainaEtAl2009}
R.~Raina, A.~Madhavan, and A.~Y. Ng.
\newblock {Large-scale deep unsupervised learning using graphics processors}.
\newblock In {\em 26th Annual International Conference}, pages 873--880, New
  York, USA, 2009. ACM.

\bibitem{rogers1984wave}
C.~Rogers and T.~Moodie.
\newblock {\em Wave Phenomena: Modern Theory and Applications}.
\newblock North-Holland Mathematics Studies. Elsevier Science, 1984.

\bibitem{Rosenblatt1958}
F.~Rosenblatt.
\newblock {The perceptron: A probabilistic model for information storage and
  organization in the brain.}
\newblock {\em Psychological review}, 65(6):386--408, 1958.

\bibitem{RudinOsherFatemi1992}
L.~I. Rudin, S.~Osher, and E.~Fatemi.
\newblock {Nonlinear Total Variation Based Noise Removal Algorithms}.
\newblock {\em Physica D}, 60(1-4):259--268, 1992.

\bibitem{scherzer2009variational}
O.~Scherzer, M.~Grasmair, H.~Grossauer, M.~Haltmeier, and F.~Lenzen.
\newblock {\em Variational methods in imaging}.
\newblock Springer, New York, USA, 2009.

\bibitem{ShuoYang:2015vr}
P.~L. C. C. L. W. K. S. X.~T. Shuo~Yang.
\newblock {Deep Visual Representation Learning with Target Coding}.
\newblock In {\em AAAI Conference on AI}, pages 3848--3854, Jan. 2015.

\bibitem{Weickert2009}
J.~Weickert.
\newblock {\em {Anisotropic Diffusion in Image Processing}}.
\newblock 2009.

\end{thebibliography}

\end{document}